\documentclass[journal]{IEEEtran}

\ifCLASSINFOpdf
\else
   \usepackage[dvips]{graphicx}
\fi
\usepackage{url}
\usepackage{graphicx}
\usepackage{xcolor}
\usepackage{adjustbox}
\usepackage{multicol}
\usepackage{multirow}
\usepackage{adjustbox}
\usepackage{wrapfig}
\usepackage{pgfplotstable}
\usepackage{multirow}
\usepackage{wrapfig}
\usepackage{algorithm}
\usepackage{algpseudocode}
\usepackage{hyperref}

\usepackage{subcaption}

\usepackage{amsfonts}
\usepackage{amsmath}
\usepackage{amssymb}
\usepackage{amsthm}

\newcommand{\risk}{R}
\newcommand{\truth}{x} 
\newcommand{\observation}{\widetilde{x}}
\newcommand{\observationset}{\mathcal{Z}}
\newcommand{\target}{y}
\newcommand{\detector}{\delta}

\newcommand{\modelparams}{\theta}
\newcommand{\modelparamsaug}{{\theta_0, \theta_1}}

\newcommand{\model}{f}

\newcommand{\expect}{\mathbb{E}}

\newcommand{\var}{\mathbb{V}}
\newcommand{\loss}{\ell}
\newcommand{\inferredtruth}{\widehat{\truth}}
\newcommand{\prot}{\text{p}}
\newcommand{\unprot}{\text{u}}
\newcommand{\attackvar}{a}
\newcommand{\unprotmatrix}{M}
\newcommand{\optimalpredweights}{ \var \left[ x_u, x_p \right] \var \left[x_p \right]^{-1}}

\theoremstyle{definition}
\newtheorem{definition}{Definition}

\newtheorem{lemma}{Lemma}
\newtheorem{proposition}{Proposition}

\usepackage{tikz}
\usepackage{pgfplots}
\usepackage{graphicx}
\usepackage{xcolor, soul}
\usepackage{mdframed}
\usepackage{lipsum} 
\pgfplotsset{compat=1.14}
\usepgfplotslibrary{external}
\usetikzlibrary{arrows.meta,calc,chains,shapes.geometric}
\usepgfplotslibrary{fillbetween}
\usepackage{tikz}
\usetikzlibrary{positioning}

\pgfplotsset{compat=1.14}
\usepgfplotslibrary{external}
\usetikzlibrary{arrows.meta,calc,chains,shapes.geometric}

\definecolor{cb-black}      {RGB}{  0,   0,   0}
\definecolor{cb-dark-blue-green} {RGB}{  0,  073,  073}
\definecolor{cb-blue-green} {RGB}{  0,  150,  150}
\definecolor{cb-green-sea}  {RGB}{  0, 146, 146}
\definecolor{cb-rose}       {RGB}{255, 109, 182}
\definecolor{cb-dark-rose}       {RGB}{127, 54, 91}
\definecolor{cb-salmon-pink}{RGB}{255, 182, 119}
\definecolor{cb-purple}     {RGB}{ 73,   0, 146}
\definecolor{cb-blue}       {RGB}{ 0, 109, 219}
\definecolor{cb-lilac}      {RGB}{182, 109, 255}
\definecolor{cb-blue-sky}   {RGB}{109, 182, 255}
\definecolor{cb-blue-light} {RGB}{182, 219, 255}
\definecolor{cb-burgundy}   {RGB}{146,   0,   0}
\definecolor{cb-brown}      {RGB}{146,  73,   0}
\definecolor{cb-clay}       {RGB}{219, 209,   0}
\definecolor{cb-green-lime} {RGB}{ 36, 255,  36}
\definecolor{cb-yellow}     {RGB}{255, 255, 109}

\begin{document}

\title{Adversarial Training of Linear Models \\ under Stealthy Attacks}

\author{Lovisa Eriksson, Dave Zachariah, and André M. H. Teixeira
\thanks{The authors are with Department of Information Technology, Uppsala University, Uppsala, Sweden. This work is supported by the Swedish Research Council under grants 2023-05234, 2021-05022 and 2024-03903; by the Swedish Foundation for Strategic Research; and by the Knut and Alice Wallenberg Foundation.}}

{}
\maketitle

\begin{abstract}
 Predictive models are widely used in many fields, but are vulnerable to false data injection attacks.
 To address this, detection schemes and adversarial training have been proposed, but such approaches lack guarantees against stealthy attacks. 
 We therefore propose a detector-based switched model, in which optimal attack strategies are stealthy. For linear prediction models, we derive a convex formulation of the resulting adversarial risk. The model incorporates protected features and introduces a hyperparameter modelling attack probability, enabling an explicit performance trade-off between clean and attacked data regimes.
 Numerical simulations on real and synthetic data show improved performance on partially attacked data, even for misspecified attack probabilities.

\end{abstract}

\begin{IEEEkeywords}
Adversarial machine learning, False data injection attacks, Detection, Robust estimation
\end{IEEEkeywords}

\IEEEpeerreviewmaketitle

\section{Introduction}
\label{section:introduction}
A basic idea for most modern machine learning algorithms is that a predictive model can be optimized on a set of past data such that it performs well on new data. However, in reality, the new data may not be drawn from the same distribution as the past data, due to natural variations \cite{Huber1964robust}, or false data injection attacks \cite{Liang2017review}. Applying a model to data subject to attacks presents a particular challenge since the new data is intentionally crafted to deteriorate the model performance \cite{Ribeiro2023overparametrized}.

False data injection attacks have been extensively studied, both in machine learning \cite{Ribeiro2023overparametrized, Carlini2017adversarial, Goodfellow2015explaining} and adjacent fields such as control theory \cite{Liang2017review}. To make the problem feasible, different assumptions are put on the attacker. Existing work either imposes a limit on the size of attack perturbations \cite{Ribeiro2023overparametrized, Goodfellow2015explaining} or imposes stealthiness \cite{Urbina2016limiting, Mao2020novel}. Classical robust statistics studies estimators under bounded contamination and worst-case deviations from nominal assumptions \cite{Huber1964robust}, but without analytically characterizing the resulting performance trade-off's across regimes.
Common defence mechanisms include adversarial training and detection schemes. Adversarial training \cite{Goodfellow2015explaining}, \cite{zhao2022adversarial} improves robustness through the generation of static perturbed examples, thus limiting the defence against unforeseen and adapting attacks. Detection schemes against adversarial examples has been shown infective against stronger, tailored attacks \cite{Carlini2017adversarial}.



To address these gaps simultaneously, we consider
the use of a predictive model that switches mode depending on whether an attack detector triggers an alarm or not. Importantly, the attack can be stealthy and we consider the possibility that some features of the new data can be protected. Formal results are derived for linear models. Our main contributions are (i) a  method that makes explicit trade-offs between accuracy on clean and attacked data, 
(ii) deriving adversarial stealthiness as a consequence of the proposed prediction model,
and (iii) providing explicit convex formulations for adversarial training of linear models under stealthy attacks. Numerical experiments illustrating the performance use both synthetic and real data.

\textit{Notation:} Throughout the paper we use $\| \cdot \|$ to denote the euclidean norm, $\expect[x]$ to denote expectation of $x$ and $\var[x, x']$ is the covariance between $x$ and $x'$, with $\var[x] \equiv \var[x,x]$. The natural logarithm is denoted $\log$ and the transpose of a vector $x$ is denoted $x^\top$. We use $p(x)$ to denote the probability density function of $x$, and $p(x=k)$ to denote the probability of event $k$.

\textit{The code is available at GitHub:}  \\\href{https://github.com/LovisaLuna/code_Adversarial-Training-of-Linear-Models-under-Stealthy-Attacks}{LovisaLuna/code\_Adversarial-Training-of-Linear-Models-under-Stealthy-Attacks}



\section{Problem Formulation}
\label{section:problem_formulation}

Let $(x,y) \in \mathcal{X} \times \mathcal{Y} \subseteq \mathbb{R}^d \times \mathbb{R}$ denote a pair of features and outcomes of interest. Let $\model_{\modelparams}(\truth)$ be a model parametrized by $\modelparams$ that predicts $y$ from $x$ with loss $\loss_\modelparams(x,y)$. Specifically, we study two cases: (i) regression $\mathcal{Y}= \mathbb{R}$ using the standard linear model with squared error loss
\begin{equation}
    \label{eq:linearregression}
    \model_{\modelparams}(\truth) = \modelparams^\top x ,\quad \loss_\theta(x, y) = | y - \model_\modelparams(x) |^2, \text{ and}
\end{equation}
(ii) binary classification  $\mathcal{Y}= \{-1, 1 \}$ using logistic regression with logistic loss
\begin{equation}
    \label{eq:logisticregression}
    \model_{\modelparams}(\truth) = \frac{1 - e^{-\modelparams^\top x}}{1 + e^{-\modelparams^\top x}}, \quad  \loss_{\modelparams}(\truth, y) = \log\left(1 + e^{-y \modelparams^\top x}\right). 
\end{equation}

During \emph{training}, we observe $n$ samples $\{(x_i,y_i)\}^n_{i=1}$ drawn independently from $p(x,y)$. At \emph{test} time, however, we observe the features $\observation$ which could be manipulated by an attack, indicated by an unobserved variable $a \in \{0,1 \}$ that occurs with probability $\gamma \in [0,1]$. We assume that features are partitioned into \emph{protected} and \emph{unprotected} components, which we express as:
\begin{equation}
    \truth = \begin{bmatrix} \truth^\top_\prot & \truth^\top_\unprot\end{bmatrix}^\top \text{ and } \observation = \begin{bmatrix} \truth^\top_\prot & \observation^\top_\unprot \end{bmatrix}^\top.
\label{eq:features}
\end{equation}

\begin{definition}
    An attack event $a=1$ is \emph{adversarial} if the observation follows
    \begin{equation}
    \label{eq:observation}
     \observation = \begin{cases}
    x, &{\text{if }} a=0,\\
    \arg \max_{z \in \observationset(\truth_\prot)} \loss_\modelparams(z, \target), &{\text{if }} a=1,
\end{cases}
\end{equation}
where $\observationset(\truth_\prot) \subseteq \{ z \in \mathcal{X}: z_p = x_p \}$. 
That is, the unprotected features in $\observation$ are designed to maximize the loss of a given model $\model_{\modelparams}$.
\end{definition}

\begin{wrapfigure}{l}{0.2\textwidth}
  \begin{center}
    \tikzset{
observed/.style = {align=center, draw, fill=white, circle, minimum height=2em, minimum width=2em, align=center},
latent/.style = {align=center, draw, dashed, fill=white, circle, minimum height=2em, minimum width=2em, align=center},
}

\begin{tikzpicture}[auto, node distance=2cm,->]
\useasboundingbox (1.8,-1.8) rectangle (-0.2,-0.2);
    
    \node [latent, name=truth] (truth) {$x$};
    \node [latent, name=attack, below of=truth, node distance = 2cm] (attack) {$a$};
    \node [latent, name=target, right of=truth, node distance = 2cm] (target) {$y$};
    \node [observed, name=observation, right of=attack, node distance = 2cm] (observation) {$\widetilde{x}$};

    \draw[->] (truth) -- (observation);
    \draw[->] (attack) -- (observation);
    \draw[->] (target) -- (observation);
    \draw[-] (truth) -- (target);

\end{tikzpicture}
  \end{center}
\end{wrapfigure}
Under this attack, we can express the process that generates the observation $\observation$ using an acyclic graph, illustrated to the left. The process induces a (marginal) distribution $p(\observation, y)$ at test time.

\begin{definition}
A (deterministic) anomaly \emph{detector} is a function
$\detector(\observation) : \mathbb{R}^d \rightarrow \{ 0,1\}$ and has a \emph{false alarm rate} $p(\detector(\widetilde{x})=1|a=0)$.
For notational brevity, the detector output will be written as $\detector = \detector(\observation)$ whenever it is unambiguous.
\end{definition}

The detector splits the feature space $\mathcal{X}$ into two parts depending on whether $\detector=0$ or $\detector=1$. 
Therefore, we propose a switched model approach that uses a \textit{nominal model} $f_{\theta_0}$ and a \textit{recovery model} $f_{\theta_1}$, depending on $\delta$.
We can express the above approach using the switched model
\begin{equation}
\label{eq:modelswitching}
    \model_{\modelparams_0, \modelparams_1}(\observation, \detector) = \begin{cases}
        \model_{\modelparams_0}(\observation), &\text{ if } \detector = 0 \\
        \model_{\modelparams_1}(\observation), &\text{ if } \detector = 1.
    \end{cases}
\end{equation}
The loss of the switched model can then be expressed as
\begin{equation}
    \label{eq:switched_loss}
    \loss_{\modelparams_0, \modelparams_1}(\observation, \target) := \detector(\observation)\loss_{\modelparams_1}(\observation, \target) + (1-\detector(\observation))\loss_{\modelparams_0}(\observation, \target),
\end{equation}
such that its expected loss, or \emph{risk}, is 
\begin{equation}
\label{eq:switched_risk}
    \risk(\modelparams_0, \modelparams_1) = \expect[\loss_{\modelparams_0, \modelparams_1}(\observation, \target)].
\end{equation}
It should be noted that minimizing the risk of the switched model can be no worse than using only one of the models $\model_{\modelparams_0}$ and $\model_{\modelparams_1}$, i.e.,
 \begin{equation}
 \label{eq:switchbetter}
     \min_{\modelparams_0, \modelparams_1} \risk(\modelparams_0, \modelparams_1) \leq \min_{\modelparams_0=\modelparams_1} \risk(\modelparams_0, \modelparams_1) = \min_{\modelparams}\expect[\loss_{\modelparams}(\observation, \target)].
 \end{equation}

However, this does not guarantee that attacked data is handled by the recovery model. If an attacker has complete knowledge of the detector function $\detector(\cdot)$, an attack event can be designed to evade detection, i.e. to be stealthy. Assuming full model knowledge of the adversary is a standard assumption in security to make sure the security of proposed methods rely only on theoretical principles, rather than lack of equipment or knowledge of the adversary \cite{Shannon1949communication}.

\begin{definition}
\label{def:stealthy}
    An adversarial attack event $a=1$ is \emph{stealthy} w.r.t. detector $\delta$ if the latter misses the attack with probability $p(\detector(\widetilde{x})=0 |\attackvar=1)=1$.  This occurs when the set of feasible attacks is further restricted to exploit the detector, by setting
    \begin{equation}
    \label{eq:stealthyRestriction}
    \observationset(\truth_\prot) \subseteq \{ z \in \mathcal{X}: z_p = x_p \} \cap \{ z \in \mathcal{X} : \detector(z) = 0 \}.
    \end{equation}
\end{definition}
   
The goal of the paper is to learn a risk-minimizing switched model $\model_\modelparamsaug$, using clean training data, that is robust against adversarial and stealthy attacks at test time, given a detector with false alarm rate no greater than $\Bar{\alpha}$.

\section{Theoretical results}
\label{section:theoretical_results}

We now show that under certain conditions on the recovery model $\model_{\modelparams_1}$, the attack  \eqref{eq:observation} will be essentially equivalent to an \emph{stealthy} adversarial attack. We formalise the required conditions as $\model_{\modelparams_1}$ being \textit{secure}.

\begin{definition}
\label{def:secure}
The recovery model $\model_{\modelparams_1}(x)$ is said to be \emph{secure w.r.t $\detector$ and $f_{\theta_0}$} if there exists an  imputation function $g(\truth_\prot) = \inferredtruth$ such that, for all $\tilde{x} = \begin{bmatrix}
    x_p^\top  & \tilde{x}_u^\top
\end{bmatrix}^\top$ ,
    \begin{equation}
    \label{eq:secure_def}
        \model_{\modelparams_1}(\observation) = \model_{\modelparams_0}(g(x_p)) = \model_{\modelparams_0}(\inferredtruth) \quad \text{and} \quad  \detector(\inferredtruth)=0.
    \end{equation}
\end{definition}

Thus the secure $\model_{\modelparams_1}$  imputes $\observation_\unprot$ and reverts to the prediction of $\model_{\modelparams_0}$, rather than taking the unprotected features at face value. With a secure model, an attack event can never cause more harm when detected.

\begin{proposition}
\label{prop:force-stealthy}
    Let $\model_{\modelparams_0, \modelparams_1}(\observation, \detector)$ be a model \eqref{eq:modelswitching} such that $\model_{\modelparams_1}$ is secure w.r.t $\delta$ and $f_{\theta_0}$. Then, any adversarial attack event against this model yields the \emph{same risk} as a stealthy adversarial attack. That is, letting $\mathcal{Z}(x_p) = \{z \in \mathcal{X}:  z_p = x_p\}$,
    \begin{equation}
        \max_{z \in \observationset(\truth_\prot)}\loss_{\modelparams_0, \modelparams_1}(z, y) \equiv \max_{z \in \observationset(\truth_\prot), \detector(z) = 0}\loss_{\modelparams_0, \modelparams_1}(z, y)
    \end{equation}
\end{proposition}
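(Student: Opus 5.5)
We prove the identity by showing two inequalities between the unrestricted maximum over $\mathcal{Z}(x_p)$ and the stealthy maximum over $\mathcal{Z}(x_p)\cap\{\detector(z)=0\}$. The direction ``$\geq$'' is immediate, since the stealthy feasible set is a subset of $\mathcal{Z}(x_p)$, so its maximum cannot exceed the unrestricted one. The substance is the reverse inequality. For it we show that every feasible point $z$ that triggers the detector can be replaced by a stealthy feasible point whose switched loss is at least as large.

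Fix $(x,y)$ and take any $z\in\mathcal{Z}(x_p)$. If $\detector(z)=0$, then $z$ is itself stealthy, and there is nothing to do. Otherwise $\detector(z)=1$, and by \eqref{eq:switched_loss} the switched loss is $\loss_{\modelparams_1}(z,y)$.

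\begin{itemize}
\item Since $z_p=x_p$, Definition~\ref{def:secure} gives $\model_{\modelparams_1}(z)=\model_{\modelparams_0}(\inferredtruth)$ with $\inferredtruth=g(x_p)$ and $\detector(\inferredtruth)=0$.
\item Both losses \eqref{eq:linearregression} and \eqref{eq:logisticregression} depend on the input only through the prediction. For \eqref{eq:logisticregression}, $\model_\modelparams$ is a strictly monotone function of $\modelparams^\top x$, so equal predictions mean equal $\modelparams^\top x$. Hence $\loss_{\modelparams_1}(z,y)=\loss_{\modelparams_0}(\inferredtruth,y)$.
\item Because $\detector(\inferredtruth)=0$, \eqref{eq:switched_loss} gives $\loss_{\modelparams_0,\modelparams_1}(\inferredtruth,y)=\loss_{\modelparams_0}(\inferredtruth,y)=\loss_{\modelparams_0,\modelparams_1}(z,y)$.
\end{itemize}

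The remaining step, and the main obstacle, is to check that $\inferredtruth$ is actually a feasible stealthy attack, i.e.\ that $\inferredtruth\in\mathcal{Z}(x_p)$. This requires the imputation to preserve the protected part, $g(x_p)_p=x_p$. Definition~\ref{def:secure} does not state this explicitly. I would read it as implicit in the notion of an imputation of $\observation_\unprot$, namely $\inferredtruth=[x_p^\top\ \inferredtruth_\unprot^\top]^\top$, and state this reading in the proof. It also needs $\inferredtruth\in\mathcal{X}$, which holds since $\mathcal{X}\subseteq\mathbb{R}^d$ is otherwise unconstrained.

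With the protected part preserved, $\inferredtruth$ lies in $\mathcal{Z}(x_p)\cap\{\detector=0\}$ and attains the same switched loss as $z$. Taking the supremum over all $z$ gives ``$\leq$''. If the maxima are not attained, the same argument works with suprema, since each $z$ is matched pointwise. Taking expectations over $(x,y)$ then shows the risks coincide.

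Finally, the identity says that an optimal attacker loses nothing by restricting to stealthy attacks, so without loss of generality the attack is stealthy in the sense of Definition~\ref{def:stealthy}.
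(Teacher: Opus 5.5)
Your proposal is correct and follows essentially the same argument as the paper. The ``$\geq$'' direction comes from set inclusion, and the ``$\leq$'' direction comes from replacing a detected point with the imputation $\widehat{x}=g(x_p)$, which is undetected and, by security, incurs the same switched loss. You are somewhat more careful than the paper: its proof argues only at the argmax and silently uses both $\widehat{x}\in\mathcal{Z}(x_p)$ and $\ell_{\theta_1}(z,y)=\ell_{\theta_0}(\widehat{x},y)$. One caveat is that $\widehat{x}\in\mathcal{X}$ is an extra assumption, not a consequence, since $\mathcal{X}\subseteq\mathbb{R}^d$ may be a proper subset.
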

\begin{proof}
    Note that using the constraint $\detector(z) = 0$,
    \begin{equation}
        \max_{z \in \observationset(\truth_\prot)}\loss_{\modelparams_0, \modelparams_1}(z, y) \geq \max_{z \in \observationset(\truth_\prot), \detector(z) = 0}\loss_{\modelparams_0, \modelparams_1}(z, y).
    \end{equation}
    It is then sufficient to prove that
    \begin{equation}
    \label{eq:prop1_leq}
        \max_{z \in \observationset(\truth_\prot)}\loss_{\modelparams_0, \modelparams_1}(z, y) \leq \max_{z \in \observationset(\truth_\prot), \detector(z) = 0}\loss_{\modelparams_0, \modelparams_1}(z, y).
    \end{equation}
    Let $x^* = \arg \max_{z \in \observationset(\truth_\prot)}\loss_{\modelparams_0, \modelparams_1}(z, y)$. When $\detector(x^*)=0$, \eqref{eq:prop1_leq} holds trivially. Now assume $\detector(x^*)=1$. Then, by security of $\model_{\modelparams_1}$ as per Definition \ref{def:secure}, there exists a mapping $g(x_p)=\inferredtruth$ such that $\detector(\inferredtruth)=0$ and
    \begin{equation}
        \begin{split}
            \max_{z \in \observationset(\truth_\prot)}&\loss_{\modelparams_0, \modelparams_1}(z, y) = \loss_{\modelparams_0, \modelparams_1}(x^*, y) \\ &= \loss_{\modelparams_1}(x^*, y) = \loss_{\modelparams_0}(\inferredtruth, y) \\ &= \loss_{\modelparams_0, \modelparams_1}(\inferredtruth, y) \leq \max_{z \in \observationset(\truth_\prot), \detector(z) = 0}\loss_{\modelparams_0, \modelparams_1}(z, y).
        \end{split}
    \end{equation}
\end{proof}
Since we will use a secure model, we can from here on assume that all adversarial attacks are stealthy.
%

For the rest of this section, as well as for the numerical experiments, we consider the imputation function to be 
\begin{equation}
\label{eq:optimallinearpredictor}
    g(x_p) =\begin{bmatrix}
        x_p \\ \inferredtruth_u 
    \end{bmatrix} = \begin{bmatrix}
        I \\ \optimalpredweights
    \end{bmatrix} x_p
\end{equation}
where $\inferredtruth_u$ is the optimal linear predictor \cite{Kailath2000linear},
and use the energy detector
\begin{equation}
\label{eq:energydetector}
    \detector(\observation) = \begin{cases}
        0, &\text{ if }\| \Sigma^{-\frac{1}{2}}\left( \observation_{\unprot} - \inferredtruth_\unprot \right)\|_2^2 \leq \tau_{\Bar{\alpha}} \\
        1, &\text{ otherwise} 
    \end{cases},
\end{equation}
where $\inferredtruth_\unprot$ from \eqref{eq:optimallinearpredictor}, $\Bar{\alpha}$ a desired upper bound on the false alarm rate, and
\begin{equation}
    \label{eq:sigma}
    \Sigma = \var[x_u] - \var[x_u, x_p] \var[x_p]^{-1} \var[x_u, x_p]^\top.
\end{equation}
is the error covariance of $\inferredtruth_u$ in \eqref{eq:optimallinearpredictor}.

The threshold $\tau_{\Bar \alpha}$ can be chosen to guarantee a set false alarm rate in different ways: (i) For a general distribution, 
$
    \tau_{\Bar \alpha} = \frac{d_u}{\Bar \alpha}
$
guarantees a false alarm bounded by $\Bar \alpha$, through the Chebychev inequality \cite{Dave2020robust}; 
(ii) for a Gaussian distribution,
the detector corresponds to a chi-test and the exact false alarm rate of a given $\tau$ can thus be determined.


Using this detector together with linear or logistic regression in \eqref{eq:modelswitching}, we show that there is a choice of $\modelparams_1$ that guarantees $f_{\modelparams_1}$ to be secure w.r.t. $\detector$ and $f_{\theta_0}$. 
\begin{lemma}
\label{lemma:secure}
    Consider the switched model $\model_{\modelparams_0, \modelparams_1}$ \eqref{eq:modelswitching}, with anomaly detector $\detector$ \eqref{eq:energydetector}. Then, letting
    \begin{equation}
    \label{eq:lemma}
        \modelparams_1^\top \equiv \modelparams_0^\top \begin{bmatrix} I & 0 \\ \optimalpredweights & 0
        \end{bmatrix}
    \end{equation}
    ensures that $\model_{\modelparams_1}$ is secure w.r.t. $\delta$ and $f_{\theta_0}$.
\end{lemma}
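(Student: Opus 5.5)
We would verify the two conditions of Definition~\ref{def:secure} directly, taking as imputation function the linear predictor $g$ in \eqref{eq:optimallinearpredictor}. Write $W = \optimalpredweights$ and $M = \begin{bmatrix} I & 0 \\ W & 0 \end{bmatrix}$. With this notation $g(x_p) = M\observation$ for every $\observation = \begin{bmatrix} x_p^\top & \observation_u^\top \end{bmatrix}^\top$, because the zero block column discards $\observation_u$. The whole argument is therefore to show that $\model_{\modelparams_1}(\observation)$ depends on $\observation$ only through $\modelparams_0^\top M \observation$, and that $g(x_p)$ lies in the region where the detector stays silent.

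\emph{Step 1 (prediction identity).} By \eqref{eq:lemma}, $\modelparams_1^\top \observation = \modelparams_0^\top M \observation = \modelparams_0^\top g(x_p) = \modelparams_0^\top \inferredtruth$. For linear regression \eqref{eq:linearregression} this already gives $\model_{\modelparams_1}(\observation) = \model_{\modelparams_0}(\inferredtruth)$. For logistic regression \eqref{eq:logisticregression}, $\model_\modelparams$ is a fixed function of the scalar $\modelparams^\top x$ alone, so the same equality of inner products yields the same conclusion. The same reasoning shows that the losses coincide, $\loss_{\modelparams_1}(\observation, y) = \loss_{\modelparams_0}(\inferredtruth, y)$, since both losses also depend on $x$ only through $\modelparams^\top x$. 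This loss identity is what Proposition~\ref{prop:force-stealthy} actually uses.

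\emph{Step 2 (detector condition).} We evaluate the energy detector \eqref{eq:energydetector} at $\inferredtruth$. Its unprotected part is $\inferredtruth_u = W x_p$, which equals exactly the reference prediction that the detector subtracts. The residual is therefore $\Sigma^{-1/2}(\inferredtruth_u - \inferredtruth_u) = 0$, whose squared norm $0 \le \tau_{\Bar\alpha}$ for any nonnegative threshold. Hence $\detector(\inferredtruth) = 0$. This requires $\Sigma$ to be invertible (positive definite) so that the statistic is well defined. We state this as a standing assumption, and note that otherwise one can use a pseudo-inverse, which still gives a zero residual.

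There is no real obstacle here. The only point needing care is the bookkeeping of the block structure. The ordering $x = [x_p^\top, x_u^\top]^\top$ must be used consistently, so that the transpose in \eqref{eq:lemma} produces $\modelparams_1 = M^\top \modelparams_0$. Its components on the unprotected coordinates are then zero, and its components on the protected coordinates are $\modelparams_{0,p} + W^\top \modelparams_{0,u}$. We should also confirm that $\tau_{\Bar\alpha} \ge 0$, which holds for both threshold choices discussed after \eqref{eq:sigma}. Combining Steps 1 and 2 verifies \eqref{eq:secure_def} for all $\observation$ sharing the protected part $x_p$, so $\model_{\modelparams_1}$ is secure w.r.t. $\detector$ and $\model_{\modelparams_0}$.
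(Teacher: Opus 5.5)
Your proposal is correct and follows the paper's proof. Like the paper, you substitute the constraint \eqref{eq:lemma} to get $\modelparams_1^\top \observation = \modelparams_0^\top g(x_p)$ and handle the logistic case through the same inner-product identity, and you justify $\detector(\inferredtruth)=0$ by noting the residual vanishes at $\inferredtruth$, a step the paper only asserts. One cosmetic point: the paper already uses $M$ for $\begin{bmatrix} 0 & I\end{bmatrix}^\top \Sigma^{\frac{1}{2}}$, so choose a different letter for your block matrix.
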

\begin{proof}Using $g(x_p) = \begin{bmatrix}
    x_p^\top & \inferredtruth_\unprot
\end{bmatrix}^\top$ in \eqref{eq:optimallinearpredictor} with a linear regression model, we have
\begin{align}
    \model_{\modelparams_1}(\observation) &= \modelparams_0^\top \begin{bmatrix} I & 0 \\ \optimalpredweights & 0
        \end{bmatrix} \begin{bmatrix}
            x_p \\ \observation_u
        \end{bmatrix}
        \\
        &= \modelparams_0^\top g(x_p) = \model_{\modelparams_0}(g(x_p)),
\end{align}
and $\detector(\inferredtruth)=0$, thus satisfying the conditions in Definition \ref{def:secure}. A corresponding argument can be made for a logistic model. 
\end{proof}

Now, we show that the risk \eqref{eq:switched_risk} under attack is convex, guaranteeing that the global optima can be found by numerical optimization.
For notational convenience, let $M = \begin{bmatrix}
    0 & I
\end{bmatrix}^\top \Sigma^{\frac{1}{2}} $. 

\begin{proposition}
\label{prop:stealthyrisk_lin}
    Consider the linear regression model \eqref{eq:linearregression} with anomaly detector \eqref{eq:energydetector}. 
    The risk 
    \eqref{eq:switched_risk} under stealthy adversarial attacks, with attack probability $\gamma$, is 

\begin{equation}
\begin{split}
&\risk(\modelparams_0, \modelparams_1)=\gamma \expect\left[\left(\left|\target - \modelparams_0^\top g(x_p) \right| + \sqrt{\tau_{\Bar \alpha}} \left \|  \modelparams_0^\top M\right \| \right) ^2\right] \\
&+ (1-\gamma)\expect\big[\detector\|\target - \modelparams^\top_1\truth\|^2 + (1 -\detector)\|\target - \modelparams^\top_0\truth\|^2\big].
\end{split}
\label{eq:risk_stealthy_linear_regression}
\end{equation}
  
 The risk is a convex function of the model parameters $\modelparams_0$ and $\modelparams_1$.
\end{proposition}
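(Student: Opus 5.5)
We split the risk by the law of total expectation over the attack variable,
\[
\risk(\modelparams_0,\modelparams_1)=\gamma\,\expect[\loss_{\modelparams_0,\modelparams_1}(\observation,\target)\mid a=1]+(1-\gamma)\,\expect[\loss_{\modelparams_0,\modelparams_1}(\observation,\target)\mid a=0].
\]
The clean term needs no work. When $a=0$ we have $\observation=\truth$, and \eqref{eq:switched_loss} gives exactly $\detector\|\target-\modelparams_1^\top\truth\|^2+(1-\detector)\|\target-\modelparams_0^\top\truth\|^2$. Since $\detector=\detector(\truth)$ does not depend on the parameters, this term is a nonnegatively weighted sum of convex quadratics in $(\modelparams_0,\modelparams_1)$, and its expectation is convex.

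For the attacked term, Proposition~\ref{prop:force-stealthy} lets us restrict attention to stealthy attacks, using the secure choice of $\modelparams_1$ from Lemma~\ref{lemma:secure}. Under a stealthy attack $\detector(\observation)=0$, so the loss is $|\target-\modelparams_0^\top\observation|^2$, maximized over the feasible set
\[
\{\observation:\ \observation_\prot=\truth_\prot,\ \|\Sigma^{-1/2}(\observation_\unprot-\inferredtruth_\unprot)\|^2\le\tau_{\Bar\alpha}\}.
\]
We parametrize this set as $\observation=g(x_p)+M v$ with $\|v\|\le\sqrt{\tau_{\Bar\alpha}}$. This is valid because $M=[0\;I]^\top\Sigma^{1/2}$ perturbs only the unprotected block, and the constraint becomes $\|\Sigma^{-1/2}\Sigma^{1/2}v\|=\|v\|$. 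The worst-case loss is then
\[
\max_{\|v\|\le\sqrt{\tau_{\Bar\alpha}}}\left|\target-\modelparams_0^\top g(x_p)-\modelparams_0^\top M v\right|^2 .
\]
This maximization is the main step. We solve it with Cauchy--Schwarz: $|r-w^\top v|\le|r|+\|w\|\,\|v\|$, where $r=\target-\modelparams_0^\top g(x_p)$ and $w=M^\top\modelparams_0$. Equality is attained by $v=-\operatorname{sign}(r)\sqrt{\tau_{\Bar\alpha}}\,w/\|w\|$, and any feasible $v$ works when $w=0$. The maximum is therefore $(|r|+\sqrt{\tau_{\Bar\alpha}}\|\modelparams_0^\top M\|)^2$, and taking expectations gives the first term of \eqref{eq:risk_stealthy_linear_regression}.

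Convexity of the attacked term follows by composition. The map $\modelparams_0\mapsto|\target-\modelparams_0^\top g(x_p)|$ is the absolute value of an affine function, and $\modelparams_0\mapsto\|\modelparams_0^\top M\|$ is a norm of a linear map. Both are convex and nonnegative, so their nonnegative combination $h$ is convex and nonnegative. Since $t\mapsto t^2$ is convex and nondecreasing on $[0,\infty)$, $h^2$ is convex. Alternatively, $h^2$ is a pointwise maximum over $v$ of convex quadratics in $\modelparams_0$. Expectation preserves convexity, and a nonnegative combination of the two convex terms with weights $\gamma$ and $1-\gamma$ is convex in $(\modelparams_0,\modelparams_1)$.

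The subtle point is the attacked term when $\modelparams_1$ is free. We handle it with Proposition~\ref{prop:force-stealthy} and the secure parametrization of Lemma~\ref{lemma:secure}, which together justify replacing the attacked loss by the stealthy worst case involving only $\modelparams_0$.
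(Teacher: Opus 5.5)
Your proof is correct and follows the paper's route: the same split over $a$, the same change of variables $\xi=\Sigma^{-1/2}(z_{\unprot}-\inferredtruth_{\unprot})$ (your $v$), and the same worst-case direction aligned with $\Sigma^{1/2}\modelparams_{0,\unprot}$ with sign opposite to the residual. The convexity argument is also the paper's composition argument, and your Cauchy--Schwarz upper bound and explicit use of monotonicity of $t\mapsto t^2$ on $[0,\infty)$ make precise steps the paper only asserts. The detour through Proposition~\ref{prop:force-stealthy} and Lemma~\ref{lemma:secure} is harmless but not needed, since stealthiness is a hypothesis of the statement and under it the attacked loss involves only $\modelparams_0$ for any $\modelparams_1$; also take $\operatorname{sign}(0)=1$ so your maximizer still attains the bound when $r=0$.
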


\begin{proof}
The expectation \eqref{eq:switched_risk} can be divided into two cases depending on the attack event $\attackvar$, so that
\begin{align*}
    &\risk(\modelparams_0, \modelparams_1) \\ &= \gamma \expect[\loss_{\modelparams_0, \modelparams_1}(\observation, \target)|a=1] + (1 -\gamma) \expect[\loss_{\modelparams_0, \modelparams_1}(\observation, \target)|a=0] \\
    &= \gamma \expect[\max_{z \in \observationset(\observation_\prot), \detector(z)=0}\loss_{\modelparams_0, \modelparams_1}(z, \target)] + (1 -\gamma) \expect[\loss_{\modelparams_0, \modelparams_1}(\truth, \target)]. 
\end{align*}
By \eqref{eq:switched_loss}, the second term becomes
\begin{equation}
    (1-\gamma)\expect[\detector\|\target - \modelparams^\top_1\truth\|^2 + (1 -\detector)\|\target - \modelparams^\top_0\truth\|^2].
\end{equation}

Now, we can focus on the first term. Since the adversary is stealthy,
\begin{equation}
    \max_{z \in \observationset(\observation_\prot), \detector(z)=0}\loss_{\modelparams_0, \modelparams_1}(z, \target) = \max_{z \in \observationset(\observation_\prot), \detector(z)=0}\loss_{\modelparams_0}(z, \target),
\end{equation}
and plugging in the squared error loss this becomes
    \begin{align}
        & \max_{z \in \observationset(\observation_\prot), \detector(z) = 0} \lVert y-\modelparams_0^\top z\rVert_2^2 ] \\
         = &\max_{\detector(z) = 0} \lVert y- \modelparams_{0,\prot}^\top \truth_\prot -  \modelparams_{0,\unprot}^\top z_u \rVert_2^2  ,
    \end{align}
    where $\modelparams_0 = \begin{bmatrix}\modelparams_{0,\prot} &\modelparams_{0,\unprot}  \end{bmatrix}^\top$, and we can rewrite
    \begin{equation}
    \label{eq:proof-parameterChange}
        \modelparams_{0,\unprot}^\top z_u = \modelparams_{0, \unprot}^\top (\Sigma^{\frac{1}{2}}\Sigma^{-\frac{1}{2}}(z_\unprot-\inferredtruth_\unprot) + \inferredtruth_\unprot).
    \end{equation}

    Now, letting $\xi = \Sigma^{-\frac{1}{2}}(\observation_\unprot-\inferredtruth_\unprot)$, we get
    \begin{align}
        \max_{\|\xi\|_2 \leq \sqrt{\tau_{\Bar \alpha}}} \| y - \modelparams_{0,\prot}^\top \truth_\prot - \modelparams_{0, \unprot}^\top \Sigma^{\frac{1}{2}}\xi - \modelparams_{0, \unprot}^\top \inferredtruth_\unprot \|_2^2
    \end{align}
    By collecting the constant terms in $c = y - \modelparams_{0,\prot}^\top \truth_\prot - \modelparams_{0, \unprot}^\top \inferredtruth_\unprot$, the expression can be rewritten as
    \begin{align}
    c^2 + \max_{\|\xi\|_2 \leq \sqrt{\tau_{\Bar \alpha}}}(\modelparams_{0, \unprot}^\top \Sigma^{\frac{1}{2}}\xi)^2 - 2c\modelparams_{0, \unprot}^\top \Sigma^{\frac{1}{2}}\xi.
    \end{align}
    Define $r=\modelparams_{0, \unprot}^\top \Sigma^{\frac{1}{2}}\xi$. 
    We can choose $\xi$ in the direction of the main singular vector of $\modelparams_{0, \unprot}^\top \Sigma^{\frac{1}{2}}$, such that
    \begin{align}
        |r| &= \lVert \modelparams_{0, \unprot}^\top \Sigma^{\frac{1}{2}}\xi \rVert 
        = \lVert \modelparams_{0, \unprot}^\top \Sigma^{\frac{1}{2}}\rVert \lVert \xi \rVert 
        = \sqrt{\tau_{\Bar \alpha}} \lVert \modelparams_{0, \unprot}^\top \Sigma^{\frac{1}{2}}\rVert,
    \end{align}
    and $\text{sign}(r)= \text{sign}(-c)$, which gives us the maximum as
    \begin{align}
    c^2 + \tau_{\Bar{\alpha}} \lVert \modelparams_{0, \unprot}^\top \Sigma^{\frac{1}{2}}\rVert^2 - 2c\sqrt{\tau_{\Bar{\alpha}}} \lVert \modelparams_{0, \unprot}^\top \Sigma^{\frac{1}{2}}\rVert.
    \end{align}
The expression can be compactly written as
\begin{equation}
\label{eq:proof-advLoss}
    \left(|y-\modelparams_0^\top \inferredtruth| + \sqrt{\tau_{\Bar{\alpha}}} \lVert \modelparams_{0}^\top \unprotmatrix\rVert\right)^2.
\end{equation}

This formulation is convex in $\modelparams_0$ since it is a monotonically increasing function of a weighted sum 
of convex functions. Finally, the full risk \eqref{eq:risk_stealthy_linear_regression} thus is another weighted sum of convex functions in $\theta_0$ and $\theta_1$.
\end{proof}

Similarly, the risk under attack for logistic regression can be shown to be convex.

\begin{proposition}
\label{prop:stealthyrisk_log}
    Consider the logistic regression model \eqref{eq:logisticregression} with anomaly detector \eqref{eq:energydetector}. 
    The risk 
    \eqref{eq:switched_risk} under stealthy adversarial attacks with attack probability $\gamma$, is 

\begin{equation}
\begin{split}
\risk(\modelparams_0, \modelparams_1) = 
\gamma &\expect\left[\log \left( 1 + e^{-y\modelparams_0^\top g(x_p) + \sqrt{\tau_{\Bar{\alpha}}}\|\modelparams_0^\top\unprotmatrix\|} \right) \right] \\
+ (1-\gamma)&\expect[\detector\log\left(1 + e^{-y\theta_1^\top x}\right) \\ &+ (1-\detector)\log\left(1 + e^{-y \theta_0^\top x}\right)].
\end{split}
\label{eq:risk_stealthy_logistic_regression}
\end{equation}
This risk is convex in the model parameters $\modelparams$.
\end{proposition}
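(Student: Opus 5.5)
We follow the structure of the proof of Proposition~\ref{prop:stealthyrisk_lin}. By the law of total expectation over the attack indicator $\attackvar$, the risk \eqref{eq:switched_risk} splits into $\gamma\,\expect[\loss_{\modelparams_0,\modelparams_1}(\observation,\target)\mid a=1] + (1-\gamma)\,\expect[\loss_{\modelparams_0,\modelparams_1}(\truth,\target)\mid a=0]$. The clean term follows directly from \eqref{eq:switched_loss} with the logistic loss \eqref{eq:logisticregression}, giving the second line of \eqref{eq:risk_stealthy_logistic_regression}. For the attacked term, Lemma~\ref{lemma:secure} (its logistic version) and Proposition~\ref{prop:force-stealthy} let us assume the attack is stealthy. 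Then $\detector(z)=0$ at the maximizer, and the switched loss reduces to $\loss_{\modelparams_0}(z,\target)=\log(1+e^{-\target\modelparams_0^\top z})$, maximized over $z$ with $z_p=x_p$ and $\detector(z)=0$.

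Since $u\mapsto\log(1+e^{u})$ is strictly increasing, maximizing the loss is the same as maximizing $-\target\modelparams_0^\top z$ over the feasible set. This is the key simplification compared with the squared-loss case: no cross term in $c$ appears, so no sign bookkeeping is needed. We reuse the change of variables \eqref{eq:proof-parameterChange}, namely $z_u=\inferredtruth_u+\Sigma^{1/2}\xi$ with $\|\xi\|\le\sqrt{\tau_{\Bar\alpha}}$, which is exactly the stealthy set of the energy detector \eqref{eq:energydetector}. Then
\[
-\target\modelparams_0^\top z=-\target\modelparams_0^\top g(x_p)-\target\,\modelparams_{0,\unprot}^\top\Sigma^{\frac{1}{2}}\xi .
\]
Because $\target\in\{-1,1\}$, the Cauchy--Schwarz inequality bounds the supremum of the last term over the ball by $\sqrt{\tau_{\Bar\alpha}}\|\modelparams_{0,\unprot}^\top\Sigma^{1/2}\|$. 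The bound is attained at $\xi=-\target\sqrt{\tau_{\Bar\alpha}}\,\Sigma^{1/2}\modelparams_{0,\unprot}/\|\Sigma^{1/2}\modelparams_{0,\unprot}\|$, or at any feasible $\xi$ when the norm is zero. Since $\modelparams_{0,\unprot}^\top\Sigma^{1/2}=\modelparams_0^\top\unprotmatrix$, substituting back and taking expectations gives the first line of \eqref{eq:risk_stealthy_logistic_regression}.

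For convexity, the map $\modelparams_0\mapsto -\target\modelparams_0^\top g(x_p)+\sqrt{\tau_{\Bar\alpha}}\|\modelparams_0^\top\unprotmatrix\|$ is convex: it is a linear term plus a nonnegative multiple of a norm of a linear map. Composing with the convex, nondecreasing function $u\mapsto\log(1+e^u)$ preserves convexity, and expectation with the weight $\gamma\ge0$ does too. In the clean term, $\detector=\detector(\truth)$ does not depend on the parameters, so each summand is a nonnegative indicator times a convex function $\log(1+e^{-\target\modelparams_i^\top\truth})$ of $\modelparams_1$ or $\modelparams_0$. A nonnegative combination of convex functions is convex, so $\risk$ is jointly convex in $(\modelparams_0,\modelparams_1)$.

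The main obstacle is the step the paper's Lemma~\ref{lemma:secure} only asserts for the logistic case: that $\model_{\modelparams_1}$ with $\modelparams_1$ from \eqref{eq:lemma} is secure. We would verify it by noting that the logistic model is a fixed monotone function of $\modelparams^\top x$. Hence $\modelparams_1^\top\observation=\modelparams_0^\top g(x_p)$ gives $\model_{\modelparams_1}(\observation)=\model_{\modelparams_0}(g(x_p))$, and $\detector(g(x_p))=0$ holds trivially since the detector statistic is zero. A second point to handle is that attaining the maximum requires $\tau_{\Bar\alpha}\ge0$ and closedness of the feasible ball, which both hold here.
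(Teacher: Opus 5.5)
Your proposal is correct and follows essentially the same route as the paper's proof: condition on $a$, use monotonicity of $u\mapsto\log(1+e^{u})$ to reduce the inner maximization to maximizing $-y\theta_0^\top z$, change variables to $\xi$ in the ball of radius $\sqrt{\tau_{\Bar{\alpha}}}$, choose $\xi$ aligned with $\theta_0^\top M$, and conclude convexity from convexity of the logistic loss and of norms. Your explicit Cauchy--Schwarz maximizer (including the factor $-y$), your check of the logistic version of Lemma~\ref{lemma:secure}, and your composition rule for convexity make precise steps that the paper states only briefly.
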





\begin{proof}
The proof uses the same main ideas as the one for Proposition \ref{prop:stealthyrisk_lin}
First, the risk is divided into cases depending on whether an attack is present, and the second term follows directly from plugging in the logistic loss.
Now, consider the first term, where an attack is present. Note that by monotonicity of the logarithm and exponential, maximizing the logistic loss is equivalent to maximizing
\begin{equation}
    -\target\modelparams_0^\top z = -y \modelparams_{0,p}^\top\truth_\prot - y\modelparams_{0, \unprot}^\top z_u.
\end{equation}
Letting $\xi= \Sigma^{-\frac{1}{2}}(z_u - \inferredtruth_u)$, we get
\begin{equation}
    \max_{\xi \leq \sqrt{\tau_{\Bar{\alpha}}}} -y\modelparams_{0}\inferredtruth + \modelparams_0^\top\unprotmatrix \xi.
\end{equation}
By choosing $\xi$ in the proper direction, we obtain the maximum loss
\begin{equation}
    \log\left(1+e^{-y\modelparams_0^\top\inferredtruth + \sqrt{\tau_{\Bar{\alpha}}} \|\modelparams_0^\top \unprotmatrix\|}\right),
\end{equation}
and the desired formulation is obtained.
The convexity follows straight-forward from the convexity of the logistic loss and norms.
\end{proof}

It can be noted that as $\gamma \rightarrow 0$ in \eqref{eq:risk_stealthy_linear_regression} and \eqref{eq:risk_stealthy_logistic_regression}, the risk minimizing $\theta_0^\top = \begin{bmatrix}
    \theta_p^\top & \theta_u^\top
\end{bmatrix}$ sets $\theta_u=0$, as this minimizes $\|\theta_0^\top M\|$ and the rank of $\theta_0^\top g(x_p)$ allows $\theta_u$ to be treated as a free variable. 


\section{Method}
\label{section:method}
The adverserial training is achieved by minimizing the empirical expectation 
$   \risk_n(\modelparams_0, \modelparams_1) = \expect_n[\loss_{\modelparams_0, \modelparams_1}(\observation, y)] = \frac{1}{n}\sum_{i=1}^n \loss_{\modelparams_0, \modelparams_1}(\observation^{(i)}, y^{(i)})$ in
\eqref{eq:risk_stealthy_linear_regression} or \eqref{eq:risk_stealthy_logistic_regression}, with constraint \eqref{eq:lemma}, using the clean training data. We use stochastic gradient descent with momentum and decreasing step size. The method requires specifying an upper bound on false alarm $\Bar{\alpha}$, and on the attack probability, denoted $\gamma_\text{train}$.





\section{Numerical results}
\label{section:numerical_results}
For comparison, we use two different baselines. First, a \textit{fully secure} model denoted  $f_{\theta_s, \theta_s}$, with the restriction $\theta_s = \begin{bmatrix}
    \theta^\top_p & 0_u
\end{bmatrix}^\top$, so that the unprotected features are discarded and the model is secure w.r.t. any detector $\delta$ and itself by Definition \ref{def:secure}. Second, a \textit{standard switched} model denoted $f_{\theta_n, \theta_s}$ with $\theta_n = \arg \min_{\theta_0}\expect[\loss_{\theta_0}(x,y)]$, i.e. trained to minimize the risk on clean data, with constraint \eqref{eq:lemma}, which we denote $f_{\theta_n, \theta_s}$. 

All models are evaluated with respect to their (empirical) risk under data that is under  stealthy adversarial attacks with probability $\gamma_{\text{test}}$. The data is split 50-50 into a train and a test set, and we evaluate the method on both synthetic and real world data. 


\subsection{Synthetic data}

We evaluate our methodology for linear regression where $d=4$ and the label is given by $y= \sum_{i=1}^4 x_i$. The distribution $p(x)$ is a zero mean Gaussian with $\var[x_i]=1$ for all $i$ and $\var[x_1,x_3] = \var[x_2, x_4] = 0.8$ (remaining covariances are zero). We use $n=2000$ training samples. 

Fig.~\ref{fig:synthetic} illustrate the results. First, stealthy attacks are clearly detrimental to the standard model since no attacks are detected! Second, when $\gamma_{\text{train}} = \gamma_\text{test}=0$, 
the proposed switched model has the same performance but converges to that of  the fully secure model as $\gamma_\text{test}$ increases. Between these points it shows an improved performance. Third, even as the assumed upper bound is incorrect, $\gamma_\text{train} < \gamma_\text{train}$, there is still a range in which the risk is lower than the secure model and is consistently lower than the standard model. 

Simulations for other parameters of the Gaussian, and for logistic regression, show the same overall behaviour with variation only in convergence rate, and plots for these are therefore excluded from this paper.

\begin{figure}
\begin{subfigure}{0.22\textwidth}
    \includegraphics[width=\linewidth]{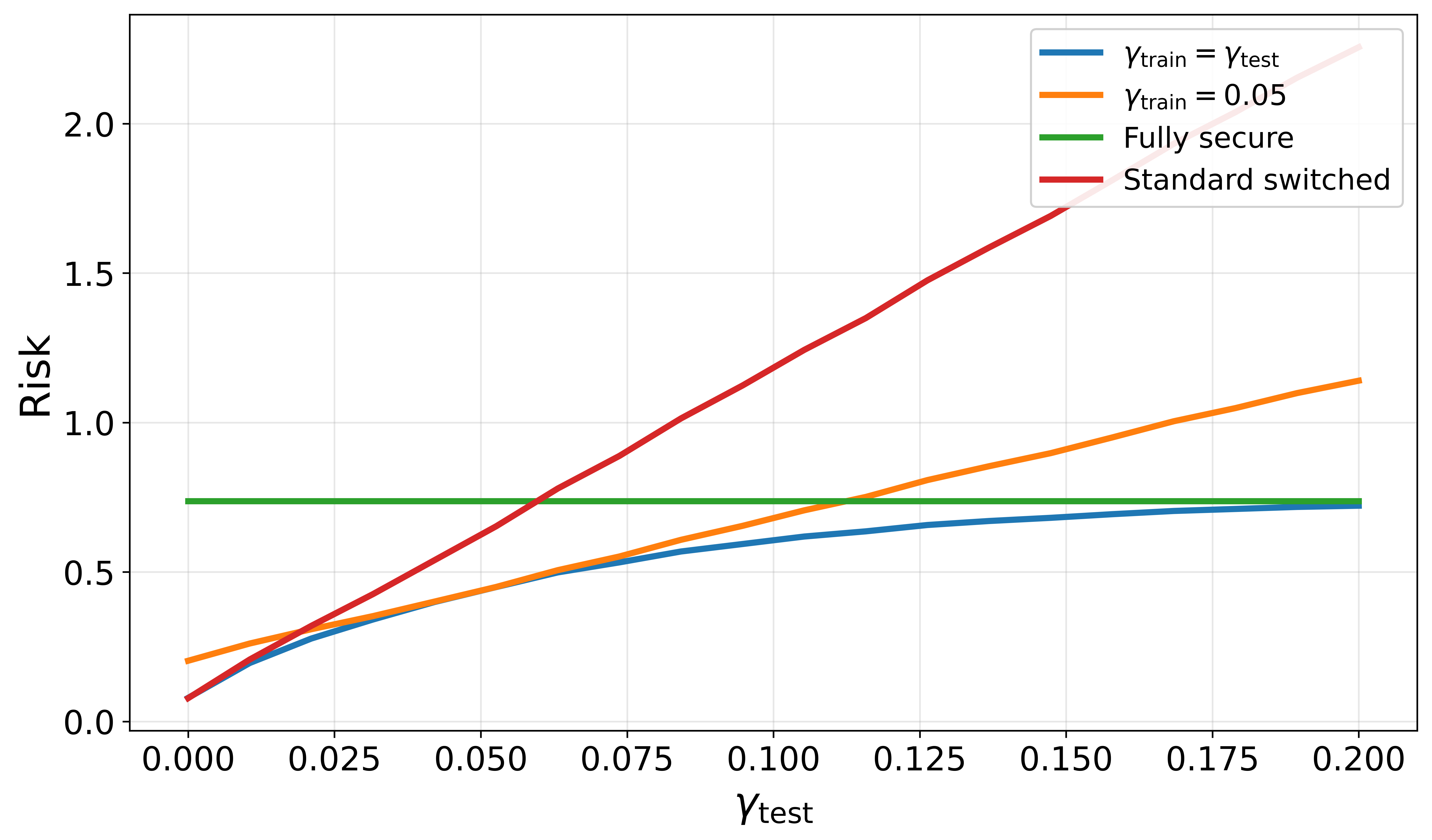}
    \caption{}
    \label{fig:linear_synthetic}
\end{subfigure}
\hfill
    \begin{subfigure}{0.22\textwidth}
    \includegraphics[width=\linewidth]{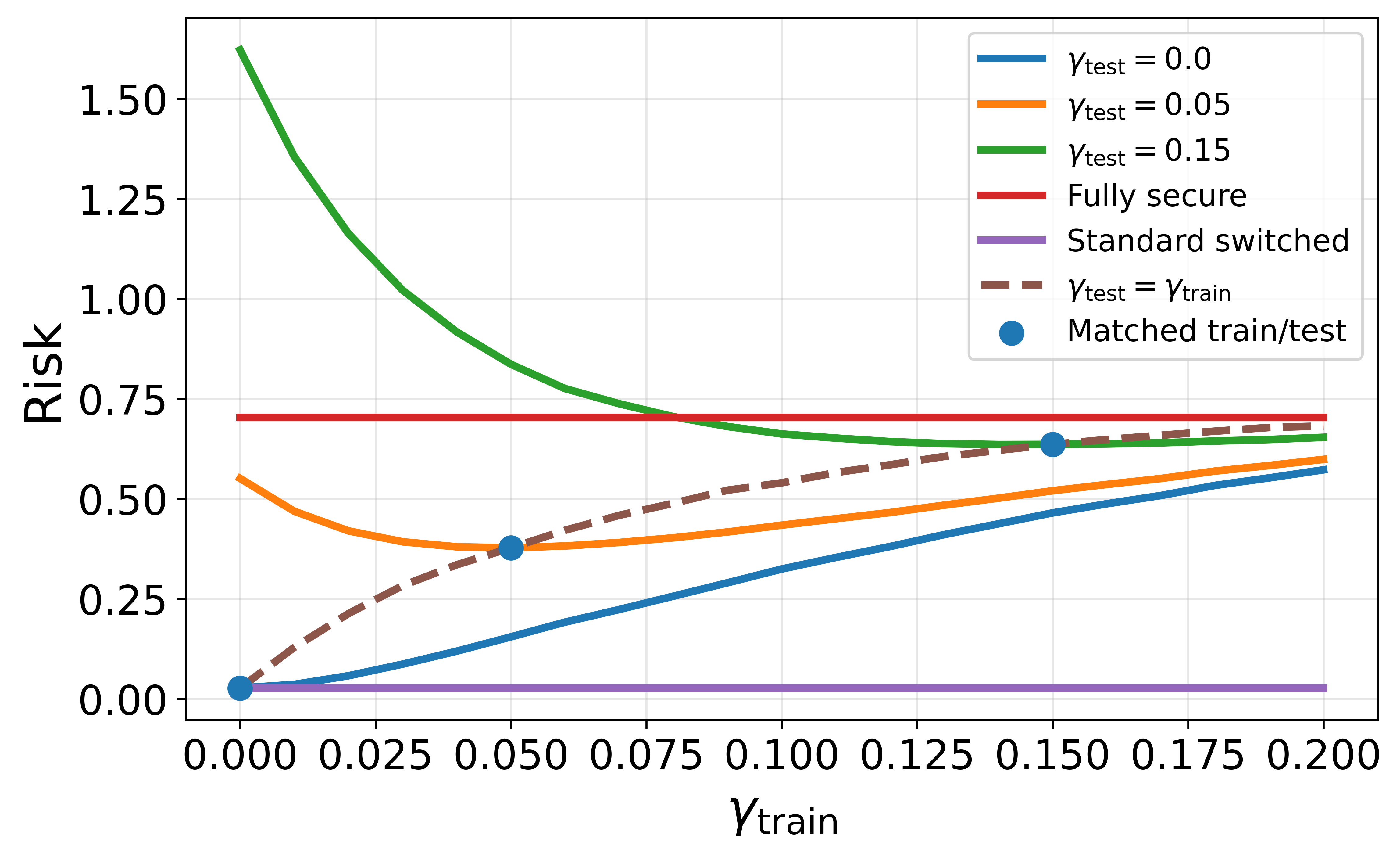}
    \caption{}
    \label{fig:misspecified}
\end{subfigure}
\caption{(a) Risk versus attack probability $\gamma_{\text{test}}$, including misspecified $\gamma_{\text{train}}=0.05$. (b) Risk versus $\gamma_{\text{train}}$, where red dots mark when the model is correctly specified and the dotted line is the correctly specified model from (a). Both (a) and (b) also show  the standard switched model $f_{\theta_n, \theta_s}$, and the fully secure model $f_{\theta_s, \theta_s}$. Both cases set $\bar{\alpha}= 0.01$ yielding an actual false alarm rate of 1.3\%.}
\label{fig:synthetic}
\end{figure}

\subsection{Real world data}
We further evaluate our training scheme on a real word Credit Card Fraud Detection data set \cite{frauddata} with $n=569k$ samples. The feature `amount' is treated as protected and 21 features are treated as unprotected (7 features removed due to high correlation with other features). The results are shown in Fig. \ref{fig:real}, where we see that the method also works for complex real word data with a more significant number of features.

\begin{figure}
\begin{subfigure}{0.22\textwidth}
    \includegraphics[width=\linewidth]{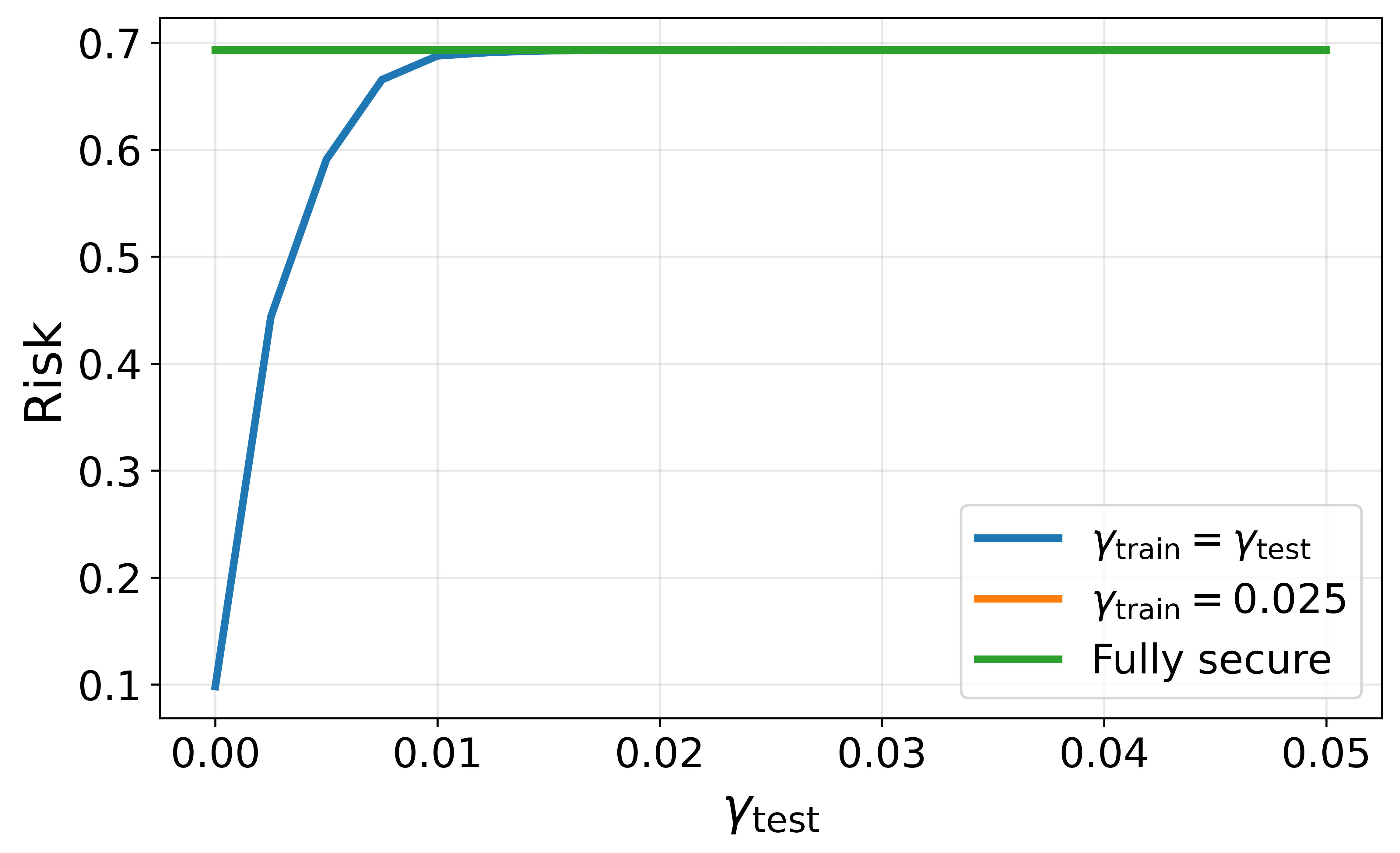}
    \caption{}
    \label{fig:zoomed}
\end{subfigure}
\hfill
    \begin{subfigure}{0.22\textwidth}
    \includegraphics[width=\linewidth]{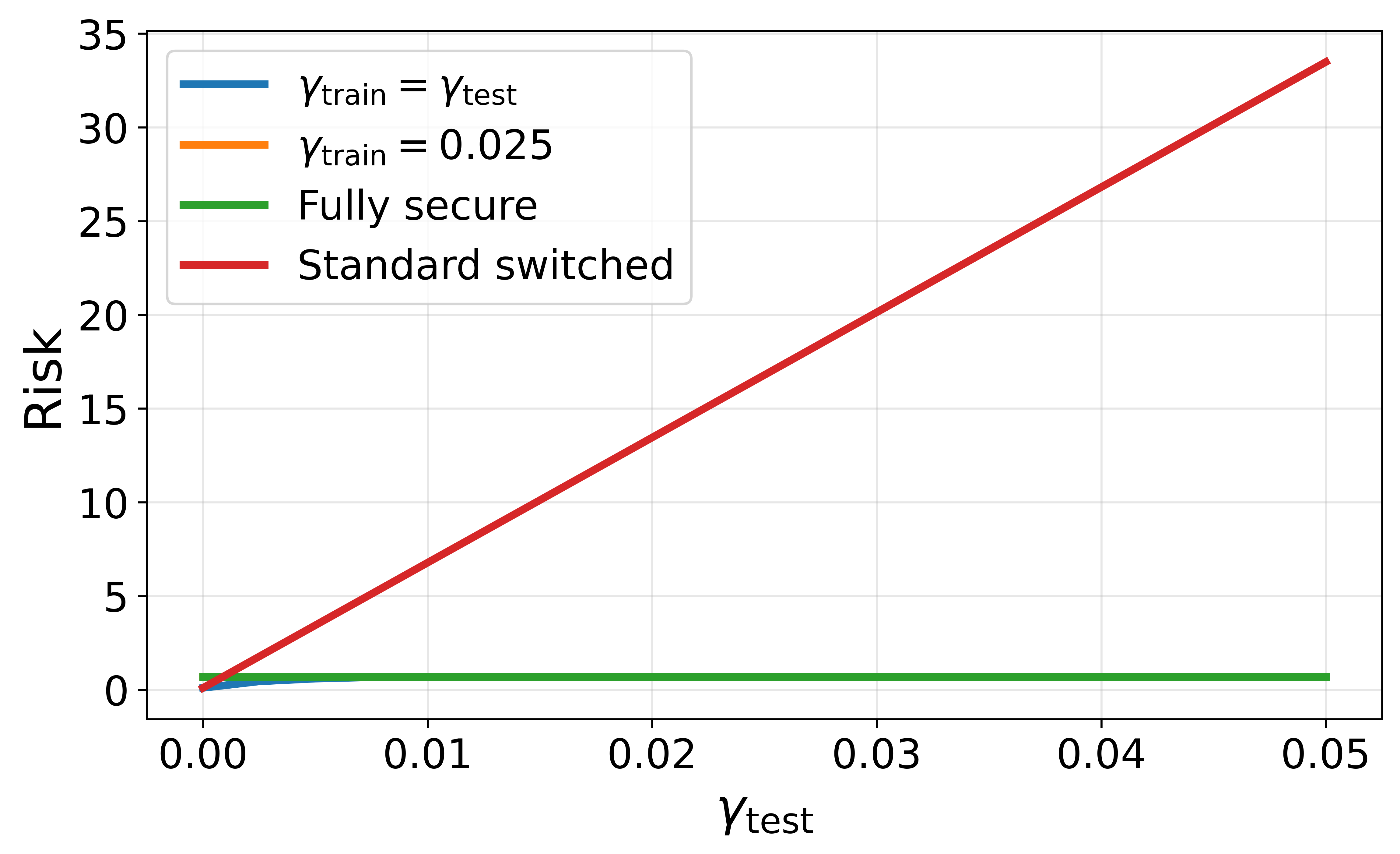}
    \caption{}
    \label{fig:naive}
\end{subfigure}
\caption{(a) Risk versus attack probability $\gamma_{\text{test}}$. (b) Same as (a) but also includes the standard switched model. In this case we set the bound $\bar{\alpha}$ yielding an actual false alarm rate of 0.01\%.}
\label{fig:real}
\end{figure}



\section{Conclusion}
\label{section:Discussion}
We have shown that using an attack detector and aligning the recovery model with the nominal model, adversarial attacks are forced to be stealthy. A convex risk for this setup is derived for linear and logistic regression, and numerical results support the optimality of the resulting model. Future work include extension to more complex models and detectors, extension to neural networks, and further investigating the relation between our method, adversarial training, and regularisation.

\bibliography{bibliography}
\bibliographystyle{IEEEtran}






\end{document}